\newtheorem{corollary}{Corollary}
\newtheorem{proposition}{Proposition}
\newcommand{\midlinewidth}{1.0pt}
\newcommand{\middist}{24pt}
\newcommand{\smalldist}{20pt}
\newcommand{\halfdist}{4pt}
\title{HyperSPNs: \\ Compact and Expressive Probabilistic Circuits}
\author{%
  Andy Shih \\
  Stanford University\\
  \texttt{andyshih@cs.stanford.edu} \\
  \And
  Dorsa Sadigh \\
  Stanford University\\
  \texttt{dorsa@cs.stanford.edu} \\
  \And
  Stefano Ermon \\
  Stanford University\\
  \texttt{ermon@cs.stanford.edu} \\
}
\begin{document}

\maketitle

\begin{abstract}
Probabilistic circuits (PCs) are a family of generative models which allows for the computation of exact likelihoods and marginals of its probability distributions. PCs are both expressive and tractable, and serve as popular choices for discrete density estimation tasks. However, large PCs are susceptible to overfitting, and only a few regularization strategies (e.g., dropout, weight-decay) have been explored. We propose HyperSPNs: a new paradigm of generating the mixture weights of large PCs using a small-scale neural network. Our framework can be viewed as a soft weight-sharing strategy, which combines the greater expressiveness of large models with the better generalization and memory-footprint properties of small models.  We show the merits of our regularization strategy on two state-of-the-art PC families introduced in recent literature -- RAT-SPNs and EiNETs -- and demonstrate generalization improvements in both models on a suite of density estimation benchmarks in both discrete and continuous domains.
\end{abstract}

\section{Introduction}

One of the core motivations for building models of probability distributions from data is to \emph{reason} about the data distribution. For example, given a model of the distribution over driving routes, we may like to reason about the probability that a driver will take a certain route given that there is congestion on a certain street. Or, given a distribution over weather conditions, we may like to reason about the probability of it raining \(k\) times in the next year. Traditional probabilistic models such as Categorical/Gaussian Mixture Models or Hidden Markov Models (HMMs) are well-equipped to answer these types of queries with exact probabilities that are consistent with the model distribution~\cite{baum1966statistical}. However, their learned model distribution might not accurately approximate the data distribution due to their simplicity.

On the other end of the spectrum, the modern wave of deep generative models has largely focused on learning accurate approximations of the true data distribution, at the cost of tractability. High capacity models such as autoregressive or flow models can mimic the true data distribution with great fidelity~\cite{Uria16NADE,Huang18NAF,papamakarios19flow}. However, they are not designed for reasoning about the probability distribution beyond computation of likelihoods. More illustrative of this trend are GANs and EBMs, which are expressive enough to sample high-quality images, but give up even the ability to compute exact likelihoods~\cite{Goodfellow14GANS,lecun2006tutorial}.

To maintain the ability to reason about the distribution induced by our model, we need to explore within tractable probabilistic models. Of these model families, probabilistic circuits (e.g. Sum Product Networks) are one of the most expressive and general, subsuming shallow mixture models, and HMMs while maintaining their tractable properties~\cite{Darwiche03,PoonSPN11,vergari2020probabilistic}. Their expressive power comes from their hierarchical / deep architecture, allowing them to express a large number of modes in their distribution. Their tractability comes from global constraints imposed in their network structure, enabling efficient and exact computation of likelihoods, marginals, and more. Probabilistic circuits (PC) are a popular choice for density estimation~\cite{Vergari15Simplifying,Mauro17Cutset} and approximate inference in discrete settings~\cite{ShihPC20}.

Given the status of probabilistic circuits as one of the most expressive models among tractable probabilistic model families, recent works have looked into pushing the limits of expressivity of probabilistic circuits~\cite{LiangLLC19,peharzRATSPN19,PeharzEinet20}. Naturally, the bigger and deeper the probabilistic circuit, the greater the expressiveness of the model. This has led to a trend of building large probabilistic circuits via design choices such as tensorizing model parameters, forgoing structure learning, and more. For example, \cite{PeharzEinet20} report the training of probabilistic circuits with \(9.4\)M parameters.

However, larger probabilistic circuits are more susceptible to overfitting. In addition, the so-called double-descent phenomenon~\cite{NakkiranDD20} (wherein highly overparameterized models exhibit low generalization gap) has yet to be observed in probabilistic circuits. This puts even more importance on effective regularization strategies for training large probabilistic circuits. Recent works have used dropout~\cite{peharzRATSPN19} and weight decay~\cite{ZhangICML21}, but there has not been much exploration of other regularization strategies.

In this work, we propose HyperSPNs, which regularize by aggressively limiting the degrees of freedom of the model. Drawing inspiration from HyperNetworks~\cite{HaHyper17}, we generate the weights of the probabilistic circuit via a small-scale external neural network. More precisely, we partition the parameters (mixture weights) of the PC into several sectors. For each sector, we learn a low-dimensional embedding, then map that to the parameters of the sector using the neural network. The generated parameters still structurally form a PC, so we retain the same ability to reason about the probability distribution induced by the HyperSPN.

HyperSPNs combine the greater expressiveness of large probabilistic circuits with the better generalization of models with fewer degrees of freedom. The external neural network has much fewer parameters than the original PC, effectively regularizing the PC through a soft weight-sharing strategy. As a bonus, the memory requirement for storing the model is much smaller, and the memory requirement for evaluating the model can also be drastically reduced by materializing sectors of the PC ``on the fly''.
We verify the empirical performance of HyperSPNs on density estimation tasks for the Twenty Datasets benchmark~\cite{HaarenD12Twenty}, the Amazon Baby Registries benchmark~\cite{Gillenwater14DPP}, and the Street View House Numbers (SVHN)~\cite{netzer2011reading} dataset, showing generalization improvements over competing regularization strategies in all three settings. Our results suggest that HyperSPNs are a promising framework for training probabilistic circuits.

\section{Background}

Probabilistic circuits (PCs) are a family of generative models known for their tractability properties~\cite{Darwiche03,PoonSPN11,vergari2020probabilistic}. They support efficient and exact inference routines for computing likelihoods and marginals/conditionals of their probability distribution. The internals of a PC consist of a network (edge-weighted Directed Acyclic Graph) of sum and product nodes, stacked in a deep/hierarchical manner and evaluated from leaves to root. PCs gain their tractability by enforcing various structural constraints on their network, such as decomposability, which restricts children of the same product node to have disjoint scopes~\cite{Darwiche01DNNF,Darwiche03}. Different combination of constraints leading to different types of PCs, including Arithmetic Circuits~\cite{Darwiche03}, and most notably Sum Product Networks (SPNs)~\cite{PoonSPN11}.

The probability of an input \(\bm{x}\) on the PC is defined to be the value \(g_\text{root}(\bm{x})\) obtained at the root of the PC after evaluating the network topologically bottom-up. Internally, edges leading into a sum node are weighted with mixture probabilities \(\alpha\) (that sum to \(1\)), and edges leading into a product node are unweighted. Leaf nodes at the base of the network contain leaf distributions \(l\). The nodes of the PC are evaluated as follows, with \(ch(i)\) denoting the children of node \(i\):
\begin{align*}
    g_i(\bm{x}) = 
    \begin{cases} 
      l_i(\bm{x}) & i \text{ is leaf node} \\
      \sum_{j \in ch(i)}{\alpha_{ij} g_j(\bm{x})} & i \text{ is sum node} \\
      \prod_{j \in ch(i)}{g_j(\bm{x})} & i \text{ is product node}
   \end{cases}
\end{align*}
As long as the leaf distributions are tractable and the network structure satisfies the constraints of decomposability and smoothness, the output at \(g_\text{root}\) is guaranteed to be normalized, and corresponding marginals/conditionals can be evaluated with linear-time exact inference algorithms~\cite{Darwiche01DNNF,Darwiche03}.

The network structure of a PC refers to the set of nodes and edge connections in the DAG, excluding the edge weights. There are various ways of constructing the network structure of PCs, e.g., through the use of domain knowledge or through search-based methods~\cite{Darwiche03,LowdLearnAC08,GensLearnSPN13,Rooshenas14IDSPN}. Recently, there has been a trend of abstracting away individual sum and product nodes, and instead dealing directly with layers that satisfy the structural constraints necessary for tractability~\cite{peharzRATSPN19,PeharzEinet20,ShihPC20}. Prescribing a network structure through a stacking of these layers leads to memory and hardware efficiencies that allow us to scale up the training of PCs to millions of edges.

Given a PC with a fixed network structure, we can learn its edge weights to best fit the probability distribution of data.  We will refer to the full set of mixture probabilities leading into sum nodes in the PC as its parameters. These parameters can be learned using methods such as Expectation-Maximization (EM)~\cite{peharz2015thesis}, signomial program formulations~\cite{Zhao16Signomial}, or simply via (projected) gradient descent. EM has been a well-studied choice for optimizing PCs~\cite{peharz2015thesis,PeharzEinet20}, although it requires hand-derived updates that can be tedious and sometimes even incorrect~\cite{PoonSPN11}. In this work, we rely on gradient descent, which is straightforward and works out-of-the-box on new families of models~\cite{ZhangICML21}.

Learning a PC with a large number of parameters can be prone to overfitting. One regularization strategy that has been proposed is ``probabilistic dropout''~\cite{peharzRATSPN19}, which takes inspiration from the standard dropout technique in neural networks~\cite{SrivastavaDropout14} and randomly ignores a subset of edges leading into sum nodes. Weight decay has also been commonly used as another form of regularization~\cite{ZhangICML21}. Unfortunately, there has been little exploration done around regularization strategies, and some current approaches only work in limited settings (e.g., dropout has been limited to settings that use PCs as a discriminative classifier~\cite{peharzRATSPN19}).

\begin{figure}[tb]
    \begin{subfigure}[b]{0.75\textwidth}
        \centering
        \begin{tikzpicture}
  
  \varnode[line width=\midlinewidth]{v11}{$X_1$};
  \varnode[line width=\midlinewidth, right=\middist of v11]{v12}{$X_1$};
  \varnode[line width=\midlinewidth, right=\middist of v12]{v21}{$X_2$};
  \varnode[line width=\midlinewidth, right=\middist of v21]{v22}{$X_2$};

  \sumnode[line width=\midlinewidth, above=\smalldist of v11]{s11};
  \sumnode[line width=\midlinewidth, right=\halfdist of s11]{s12};
  \sumnode[line width=\midlinewidth, above=\smalldist of v12]{s1k};
  
  \sumnode[line width=\midlinewidth, above=\smalldist of v21]{s21};
  \sumnode[line width=\midlinewidth, right=\halfdist of s21]{s22};
  \sumnode[line width=\midlinewidth, above=\smalldist of v22]{s2k};
  
  \prodnode[line width=\midlinewidth, above=\smalldist of s1k]{p121};
  \prodnode[line width=\midlinewidth, right=\halfdist of p121]{p122};
  \prodnode[line width=\midlinewidth, above=\smalldist of s21]{p12k};
  
  \sumnode[line width=\midlinewidth, above=\smalldist of p121]{s121};
  \sumnode[line width=\midlinewidth, above=\smalldist of p122]{s122};
  \sumnode[line width=\midlinewidth, above=\smalldist of p12k]{s12k};

  \edge[line width=\midlinewidth,left] {s11, s12, s1k} {v11, v12};
  \edge[line width=\midlinewidth,left] {s21, s22, s2k} {v21, v22};
  
  \edge[line width=\midlinewidth,dashed] {p121} {s11, s21};    
  \edge[line width=\midlinewidth,dashed] {p122} {s12, s22};    
  \edge[line width=\midlinewidth,dashed] {p12k} {s1k, s2k};  
  
  \edge[line width=\midlinewidth,left] {s121, s122, s12k} {p121, p122, p12k};
  
  \varnode[line width=\midlinewidth, right=\middist of v22]{v31}{$X_3$};
  \varnode[line width=\midlinewidth, right=\middist of v31]{v32}{$X_3$};
  \varnode[line width=\midlinewidth, right=\middist of v32]{v41}{$X_4$};
  \varnode[line width=\midlinewidth, right=\middist of v41]{v42}{$X_4$};

  \sumnode[line width=\midlinewidth, above=\smalldist of v31]{s31};
  \sumnode[line width=\midlinewidth, right=\halfdist of s31]{s32};
  \sumnode[line width=\midlinewidth, above=\smalldist of v32]{s3k};
  
  \sumnode[line width=\midlinewidth, above=\smalldist of v41]{s41};
  \sumnode[line width=\midlinewidth, right=\halfdist of s41]{s42};
  \sumnode[line width=\midlinewidth, above=\smalldist of v42]{s4k};
  
  \prodnode[line width=\midlinewidth, above=\smalldist of s3k]{p341};
  \prodnode[line width=\midlinewidth, right=\halfdist of p341]{p342};
  \prodnode[line width=\midlinewidth, above=\smalldist of s41]{p34k};
  
  \sumnode[line width=\midlinewidth, above=\smalldist of p341]{s341};
  \sumnode[line width=\midlinewidth, above=\smalldist of p342]{s342};
  \sumnode[line width=\midlinewidth, above=\smalldist of p34k]{s34k};

  \edge[line width=\midlinewidth,left] {s31, s32, s3k} {v31, v32};
  \edge[line width=\midlinewidth,left] {s41, s42, s4k} {v41, v42};
  
  \edge[line width=\midlinewidth,dashed] {p341} {s31, s41};    
  \edge[line width=\midlinewidth,dashed] {p342} {s32, s42};    
  \edge[line width=\midlinewidth,dashed] {p34k} {s3k, s4k};  
  
  \edge[line width=\midlinewidth,left] {s341, s342, s34k} {p341, p342, p34k};

  \prodnode[line width=\midlinewidth, above=90pt of s2k]{p12341};
  \prodnode[line width=\midlinewidth, right=\halfdist of p12341]{p12342};
  \prodnode[line width=\midlinewidth, above=90pt of s31]{p1234k};
  
  \sumnode[line width=\midlinewidth, above=\smalldist of p12342]{s12341};
  
  \edge[line width=\midlinewidth,dashed] {p12341} {s121, s341};    
  \edge[line width=\midlinewidth,dashed] {p12342} {s122, s342};    
  \edge[line width=\midlinewidth,dashed] {p1234k} {s12k, s34k};  
  
  \edge[line width=\midlinewidth,left] {s12341} {p12341, p12342, p1234k};
  
\end{tikzpicture}
        \caption{We show the structure of a RAT-SPN over \(n=4\) variables using layer size \(k=3\). Solid edges are weighted with trainable parameters, and dashed edges are unweighted. Each fully connected layer of solid edges can be treated as a sector.}
        \label{fig:ratspn}
    \end{subfigure}%
    \hfill
    \begin{subfigure}[b]{0.23\textwidth}
        \centering
        \includegraphics[width=0.8\textwidth]{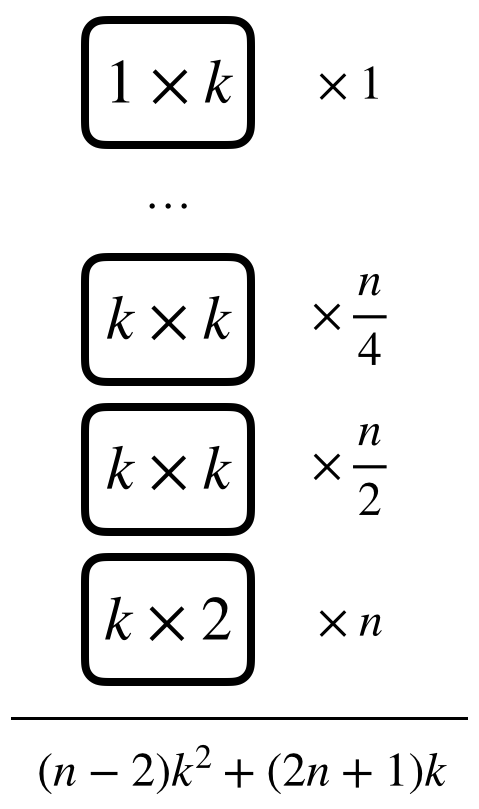}
        \caption{Counting the \# of parameters, given the number of variables \(n\) and layer size \(k\).}
        \label{fig:sectors}
    \end{subfigure}%
    \caption{Given a RAT-SPN, we can partition its set of parameters into sectors. The trainable parameters (solid edges) in Figure~\ref{fig:ratspn} can be grouped into fully-connected layers of size \(1 \times k\), \(k \times k\), and \(k \times 2\), oriented in a binary-tree structure. In Figure~\ref{fig:sectors}, we visualize the sectors as blocks, counting the number of blocks in each layer of the binary tree to arrive at the total parameter count. The sector abstraction is a key ingredient in implementing the HyperSPN's soft weight-sharing strategy. }
    \label{fig:ratspnsectors}
\end{figure}

\subsection{RAT-SPNs}
\label{sec:RAT-SPN}

Two particular PC structures have been recently proposed, namely RAT-SPNs~\cite{peharzRATSPN19} and EiNETs~\cite{PeharzEinet20}, both of which scale to millions of parameters. They are conceptually similar, the main difference being that EiNETs are designed for better GPU performance.
To ground our discussion around regularizing large PCs, we will describe our methods just on RAT-SPNs, although we also apply our methods on EiNETs in later experiments too.
In RAT-SPNs, the input variables are organized into a binary tree, with one leaf in the binary tree corresponding to each variable. We start from the base of the SPN and construct its full structure by stacking sum layers and product layers. For each leaf node in the binary tree, we create one fully connected sum layer on top of the SPN leaves. For each internal node in the binary tree, we create one product layer that merges its two children layers in the SPN, and then stack one fully connected sum layer on top. We illustrate this construction in Figure~\ref{fig:ratspnsectors}. Finally, RAT-SPNs create so-called replicas of this structure, by repeating the construction but with permuted orderings of input variables in the binary tree leaves. The replicas are then combined under one top-level mixture distribution (for readability, Figure~\ref{fig:ratspnsectors} does not show any replicas).

The size of each sum layer in the RAT-SPN is determined by a single constant \(k\). For convenience, we also let the size of each product layer be \(k\). This fully specifies the structure of our SPN. For the rest of the paper, it is helpful to abstractly group the parameters of the RAT-SPN into disjoint sectors. Corresponding to the root of the binary tree, we have a sector with \(1 \times k\) parameters. Corresponding to each internal node in the binary tree, we have a sector with \(k \times k\) parameters. Finally, corresponding to each leaf in the binary tree, we have a sector with \(k \times l\) parameters, where \(l\) is the number of SPN leaf distributions per variable (we use \(l=2\) unless otherwise stated).

As shown in Figure~\ref{fig:ratspnsectors}, for \(n\) variables (where \(n\) is a power of \(2\)) and \(r\) replicas, this corresponds to \(2n-1\) sectors totaling \(O(rnk^2)\) parameters. In particular, for each replica there are \((n-2)k^2 + (2n+1)k\) parameters.
For example, in a problem domain with \(1000\) variable, using layer size \(k=10\) and replica \(r=10\) leads to a RAT-SPN of around \(1\)M parameters.

\begin{figure}[tb]
    \centering
    \includegraphics[width=\linewidth]{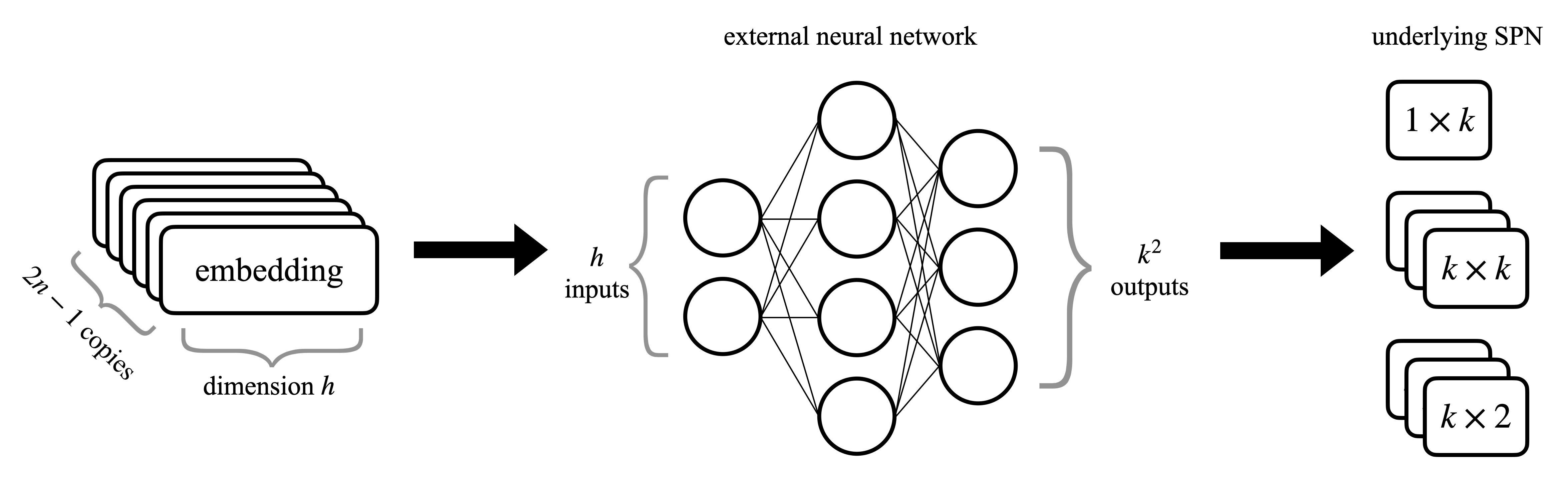}
    \caption{HyperSPN: we introduce a small-scale external neural network that generates the parameters of the underlying SPN. Using the sector abstraction from Figure~\ref{fig:ratspnsectors}, we learn an embedding of dimension \(h\) for each of the \(2n-1\) sectors. The neural network maps each embedding to the parameters of its corresponding sector, which materializes the underlying SPN.}
    \label{fig:hyperspn}
\end{figure}

\section{HyperSPNs}
\label{sec:hyperSPN}

\begin{wrapfigure}{r}{0.38\textwidth}
    \vspace{-14px}
    \centering
    \includegraphics[width=\linewidth]{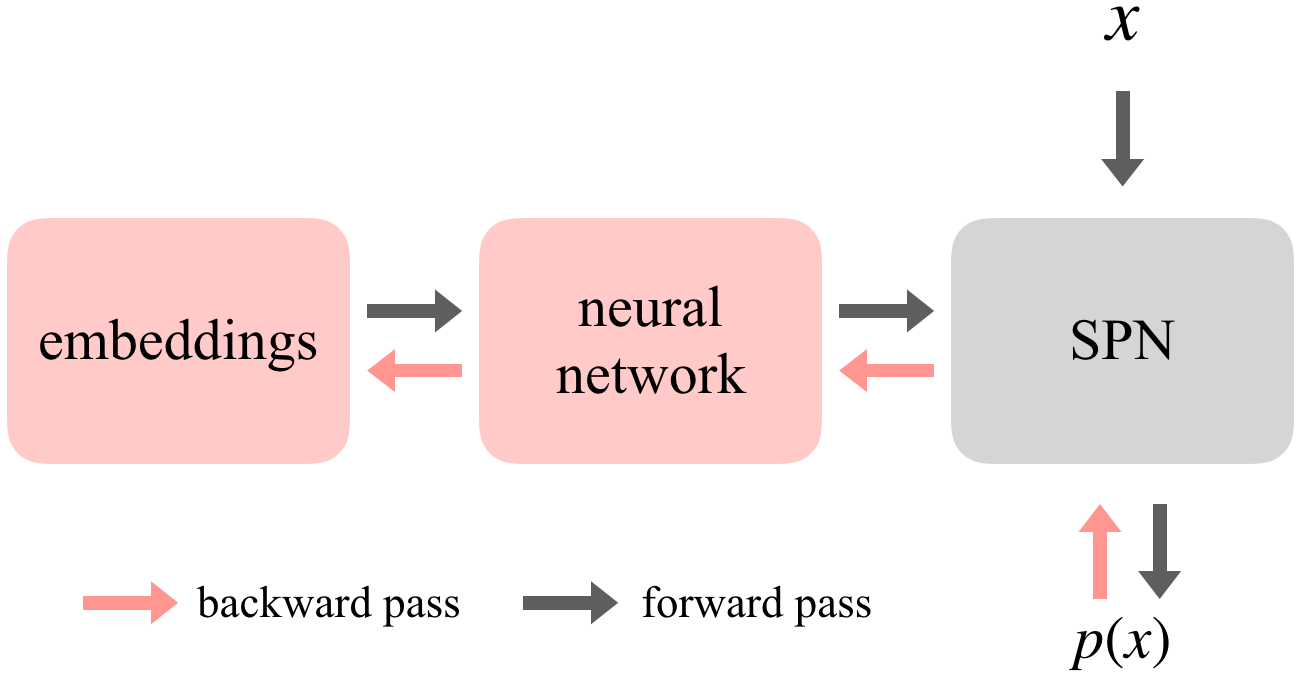}
    \caption{We depict the computation graph when training a HyperSPN. The red modules have trainable parameters, whereas the gray module does not. Red arrows show the flow of gradients during backpropagation.}
    \label{fig:compgraph}
\end{wrapfigure}

We propose HyperSPNs, an approach of regularizing probabilistic circuits with a large number of parameters by limiting the degrees of freedom of the model using an external neural network. We describe our method concretely on RAT-SPNs, which we will refer to as simply SPNs.

To regularize the SPN, we introduce a smaller-scale external neural network to generate the parameters of the SPN (see Figure~\ref{fig:hyperspn}). We associate each of the sectors of the SPN with a sector embedding. For each sector, we map the embedding to the parameters of the sector via the neural network. Then, we use these mapped parameters directly as the parameters of the SPN, re-normalize such that parameters leading into a sum node add up to \(1\), and evaluate the SPN as usual. We visualize the computation graph of this process in Figure~\ref{fig:compgraph}. We materialize the SPN in the forward pass, and propagate the gradients through the SPN in the backward pass. We optimize the neural network and embeddings jointly using gradient descent.

HyperSPNs can aggressively reduce the degrees of freedom of the original SPN. Using embeddings of size \(h\), the number of learnable parameters is reduced to \(O((rn+k^2)h)\). For the same above example with \(n=1000, k=10, r=10\), choosing embeddings of dimension \(h=5\) reduces the number of learnable parameters \(20\) fold, from around \(1\)M down to around \(50\)k parameters.

The lower degrees of freedom mitigates overfitting of the model. At the same time, the materialized SPN has a large number of mixture weights, and thus can express distribution families with a greater number of clusters compared to a small SPN with the same degrees of freedom. We visualize this with a motivating example in Figure~\ref{fig:motiv}, where we represent a dataset using a grid that is split recursively for each variable (we only draw the splits for the first \(2\) variables; the rest is implied). Intuitively, the number of parameters of the SPN is tied closely to the number of clusters in the distribution. As a result, training a large SPN can produce many clusters that overfit to the training points in blue (Figure~\ref{fig:motive_largeSPN}), and training a small SPN can lead to coarse factorizations and inflexible clusters (Figure~\ref{fig:motive_smallSPN}). Both cases can lead to poor generalization on the testing points in orange.

On the other hand, the HyperSPN uses a soft weight-sharing strategy by storing its parameters as low-dimensional embeddings and decoding them through a small-scale shared neural network. This allows the model to learn a high number of clusters without suffering from too high of a degree of freedom. In Figure~\ref{fig:motive_hyperSPN}, we visualize the soft weight-sharing as fixing the shape of the cluster-triples within each quadrant, while still giving the model control over the center position of each cluster-triple. In this way, HyperSPNs can improve generalization compared to a large SPN with the same number of mixture weights and a small SPN with the same degrees of freedom. We verify this phenomenon on a hand-crafted dataset in the Appendix, and further observe similar improvements on real world datasets in our experiments.

\begin{figure}[tb]
    \centering
    \includegraphics[width=0.15\linewidth]{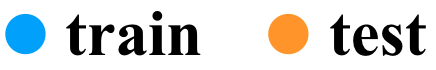}
    
    \begin{subfigure}[h]{0.3\textwidth}
        \centering
        \includegraphics[width=0.65\linewidth]{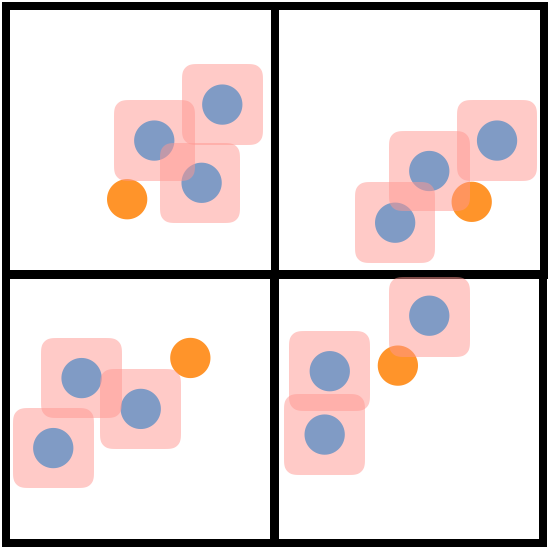}
        \caption{Large SPN has too much flexibility in fitting its clusters.}
        \label{fig:motive_largeSPN}
    \end{subfigure}%
    \hspace{0.04\linewidth}
    \begin{subfigure}[h]{0.3\textwidth}
        \centering
        \includegraphics[width=0.65\linewidth]{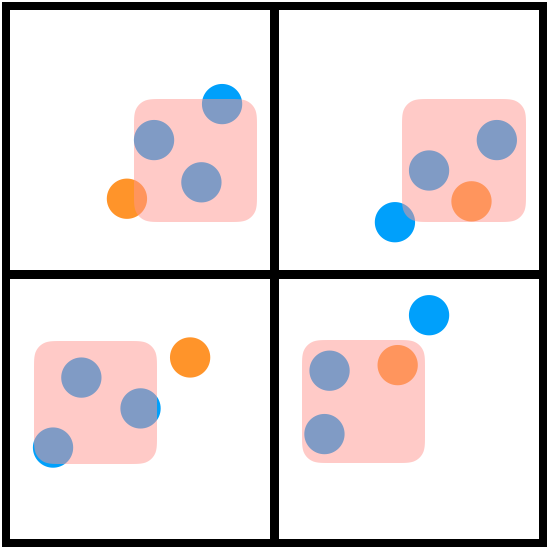}
        \caption{Small SPN is limited in the flexibility and number of clusters.}
        \label{fig:motive_smallSPN}
    \end{subfigure}%
    \hspace{0.04\linewidth}
    \begin{subfigure}[h]{0.3\textwidth}
        \centering
        \includegraphics[width=0.65\linewidth]{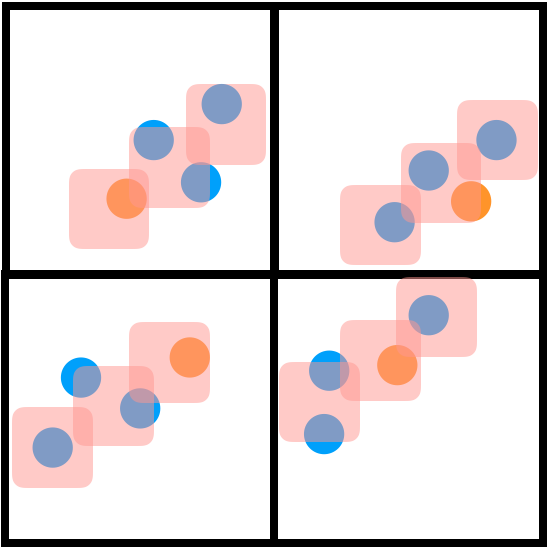}
        \caption{HyperSPN has many clusters, with shared degree of freedom.}
        \label{fig:motive_hyperSPN}
    \end{subfigure}
    \caption{We visualize the soft weight-sharing effect in HyperSPNs, with training data in blue and testing data in orange. HyperSPNs use a large underlying SPN structure, allowing for a large number of clusters (\(3\) in each quadrant) to fit the data well. Since the SPN parameters correspond to directly mixture weights, the weight-sharing strategy in HyperSPNs prevent overfitting by limiting the degrees of freedom across clusters. We depict this as fixing the shape of the cluster-triples across quadrants.}
    \label{fig:motiv}
\end{figure}

\subsection{Memory-Efficient Evaluation}
\label{sec:memory}

Another merit of HyperSPNs is the ability for low-memory storage of the model, since we only need to store the embeddings and the neural network, which together have much fewer parameters than the SPN. Importantly, our abstraction of the SPN structure into sectors allows us to forgo storage of the nodes and edges in the SPN network. Moreover, not only can we be memory-efficient during storage , but we can also be memory-efficient during evaluation. To do so, we materialize sectors of the SPN on the fly, and erase them from memory when they become unnecessary.

The sectors of the SPN are structured as a binary tree, so to evaluate each sector we need all the input values leading into the sector. These input values correspond to the output values (each a vector of size \(k\)) of its children sectors. Evaluating the binary tree bottom-up depth-wise is possible, but is also costly since the leaf layer has \(n\) sectors leading to \(O(nk)\) cost for storing the output values of children sectors. Rather, we can proceed greedily, evaluating leaf sectors left-to-right and propagating values up a sectors as soon as all of its children sectors have been evaluated. A simple recursive argument shows that this procedure requires storing only \(O(k \log n)\) intermediate values at a time, and is provably optimal.

\begin{proposition}
Evaluating a complete binary-tree computation graph requires \(\Theta(\log n)\) memory, where \(n\) is the number of leaves in the complete binary-tree, and the memory cost of storing each intermediate value and performing each computation is assumed to be a constant.
\end{proposition}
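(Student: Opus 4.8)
The plan is to prove matching upper and lower bounds of $\log_2 n + 1$ on the number of intermediate values that must reside in memory simultaneously, treating the evaluation as a register-allocation (pebbling) problem on the tree and taking $n = 2^h$, a complete tree of height $h$. The quantity of interest is exactly the Strahler/Ershov number of the tree, and I would establish both bounds by induction on $h$, each intermediate value (a length-$k$ vector) counting as one unit of memory per the stated assumption.

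For the upper bound, I would analyze the greedy left-to-right schedule described just above the statement. Let $M(h)$ denote the peak number of stored values when this schedule evaluates a complete tree of height $h$. To evaluate the root, the schedule first evaluates the left subtree of height $h-1$ (peak $M(h-1)$), retains its single output value, then evaluates the right subtree of height $h-1$ while that one value is held, for a peak of $M(h-1)+1$; combining at the root does not raise the count. Hence $M(h) = M(h-1) + 1$ with $M(0) = 1$, giving $M(h) = h+1 = \log_2 n + 1 = O(\log n)$. This half is essentially a routine recursion.

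For the lower bound I would show that \emph{every} schedule (computing each node once, freeing a value only after its parent consumes it) stores at least $h+1$ values at some instant, again by induction on $h$. At the instant $t_0$ the root is produced, both child values are in memory; let $c_2$ be the child produced later, at time $t_2$, and $c_1$ the earlier one, so $c_1$ persists throughout $[t_1, t_0]$. Restricting attention to the computation steps inside the subtree rooted at $c_2$ yields a valid evaluation of a height-$(h-1)$ tree, which by induction reaches $\geq h$ stored values at some peak time $t^\star$; if $t^\star$ lies after $c_1$ is produced, then $c_1$ coexists with those $h$ values and we obtain $h+1$.

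The hard part is the interleaving case, where a subtree's peak occurs before the sibling's root value exists, so the naive ``hold one, recurse on the other'' accounting breaks. I would close this gap with a \emph{progress invariant} enforced by the no-recomputation assumption: once the first node of a subtree is computed, at least one live value of that subtree must remain in memory until its root is produced, since otherwise the discarded values could never be regenerated and the root could never be completed. Applying this invariant to the earlier subtree shows that at the later subtree's peak time $t^\star$ there is always at least one additional live value from the sibling, forcing $\geq h+1$ in every case. Since a complete tree is symmetric, both subtrees share the same Strahler number at every level, so the recursion is always $M(h)=M(h-1)+1$; combining this with the upper bound gives $\Theta(\log n)$.
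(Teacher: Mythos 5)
Your overall strategy is the same as the paper's: a post-order/greedy schedule gives the $\log_2 n + 1$ upper bound, and an inductive lower bound charges one extra unit of memory to the sibling subtree. Your ``progress invariant'' is precisely the fact the paper uses implicitly when it writes that processing the first-started child ``requires at least 1 unit of memory throughout,'' and your upper-bound recursion $M(h)=M(h-1)+1$ matches the paper's post-order analysis.

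There is, however, a genuine slip in how you index the lower-bound induction. You take $c_2$ to be the child \emph{produced} later, apply the inductive hypothesis to the evaluation restricted to $c_2$'s subtree, and then look for one extra live value from the sibling at the peak time $t^\star$ of that restricted evaluation. But $t^\star$ can occur before the sibling's subtree has computed \emph{any} node, in which case neither ``$c_1$ is already live'' nor your progress invariant (which only fires once the sibling's subtree has started) supplies the extra unit. Concretely, with $n=4$ and leaves $l_1,l_2$ under $c_L$ and $l_3,l_4$ under $c_R$: compute $l_1$, then $l_2$ (this is the peak of $c_L$'s restricted evaluation, with the sibling subtree untouched), then $l_3$, $l_4$, then $c_R$, then $c_L$, then the root. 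Here $c_L$ is the child produced later, yet at its subtree's peak nothing of $c_R$'s subtree exists, so your accounting certifies only $h$ values at $t^\star$, not $h+1$. The fix is exactly the paper's choice of indexing: let $j_1$ be the child whose subtree is \emph{begun} first and $j_2$ the one begun second. Then every step of $j_2$'s restricted evaluation, including its peak, occurs after $j_1$'s subtree has started and before the root consumes $j_1$, so your invariant always delivers the extra live value and $C(i)\ge 1+C(j_2)$ holds unconditionally. With that one change your argument is the paper's argument, written out in more detail.
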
%
\begin{proof}
Perform the sequence of computations based on a post-order traversal of the binary tree, erasing children values from memory when their parent value has been computed. This process requires storing at most \(\log n + 1\) intermediate values at any time. To show optimality, let \(C(i)\) denote the optimal memory cost needed to compute the value for a node \(i\). For non-leaf nodes \(i\), Let \(j_1\) be the first child of \(i\) that we begin to process, and \(j_2\) be the second. Since we processed \(j_1\) first (which requires at least \(1\) unit of memory throughout), we have \(C(i) \geq 1 + C(j_2)\). As the base case, for leaf nodes \(i\), we have \(C(i) = 1\). Thus, the optimal memory cost for the whole computation is at least \(C(\text{root})=\log n + 1\).
\end{proof}
\begin{corollary}
Evaluating a HyperSPN with an underlying RAT-SPN structure requires \(O(k \log n + h(k^2+rn))\) memory.
\label{cor:mem}
\end{corollary}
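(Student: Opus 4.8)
The plan is to account separately for the two kinds of memory that the evaluation consumes: the \emph{persistent} storage of the HyperSPN model (its sector embeddings together with the shared neural network), and the \emph{transient} working set of intermediate values produced while evaluating the underlying SPN. I would argue that the total is the sum of these two contributions, and that each matches one of the two terms in the claimed bound.

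For the transient term I would invoke the Proposition directly. The sectors of a single replica are arranged as a complete binary tree with \(n\) leaves (one leaf sector per variable), so the greedy post-order evaluation analyzed in the Proposition stores at most \(\log n + 1\) sector outputs at any instant. The only change from the unit-cost setting of the Proposition is that each ``intermediate value'' is now the length-\(k\) output vector of a sector rather than a scalar; multiplying the \(\Theta(\log n)\) count by the per-value cost \(O(k)\) gives \(O(k\log n)\). Since the \(r\) replicas can be evaluated one after another, reusing the same scratch space and retaining only their \(r\) scalar root outputs (costing \(O(r) = O(rn)\)), this working set never exceeds \(O(k\log n)\) plus lower-order terms.

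For the persistent term I would count the model exactly as in the parameter budget established earlier: the \(O(rn)\) sector embeddings, each of dimension \(h\), contribute \(O(rnh)\), and the shared network that decodes an embedding into a sector's (at most \(k\times k\)) weight block contributes \(O(hk^2)\), for a total of \(O(h(rn+k^2))\). These parameters must remain resident throughout evaluation so that any sector can be materialized on demand.

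The one point requiring care, and the place I expect the argument to be most delicate, is the cost of materializing a sector ``on the fly.'' When the traversal reaches a sector we decode its embedding into as many as \(k^2\) weights, evaluate the corresponding sum/product layer, and then erase those weights before moving on. This adds a transient \(O(k^2)\) term, but it is dominated by the \(O(hk^2)\) already present in the persistent term (for \(h \ge 1\)), so it does not enlarge the bound. The final step is simply to add the transient cost \(O(k\log n)\) to the persistent cost \(O(h(rn+k^2))\), yielding \(O(k\log n + h(k^2+rn))\) as claimed.
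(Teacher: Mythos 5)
Your proof is correct and follows essentially the same route as the paper: the paper likewise splits the bound into an \(O(k\log n)\) transient cost for intermediate sector outputs (the Proposition's \(\Theta(\log n)\) count times the size-\(k\) output vectors) plus an \(O(h(rn+k^2))\) persistent cost for the embeddings, the shared network, and materializing one sector at a time. Your extra observations---handling replicas sequentially and noting that the \(O(k^2)\) cost of a materialized sector is absorbed by the \(O(hk^2)\) term---are consistent with, and slightly more explicit than, the paper's one-paragraph justification.
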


The cost in Corollary~\ref{cor:mem} can be broken down into two parts, where \(O(k \log n)\) memory is required for storing intermediate values in the binary-tree computation graph, and \(O(h(rn+k^2))\) memory is required for storing the HyperSPN parameters and materializing the SPN one sector at a time.

The takeaway is that both storing and evaluating a HyperSPN require less memory than storing and evaluating the underlying SPN, which require \(O(rnk^2)\) memory. This memory efficiency arises from our key insight of using sector abstractions to enable 1) encoding the SPN parameters as embeddings, 2) forgoing storage of the nodes and edges in the SPN network, and 3) mapping the SPN to a binary-tree computation graph that allows for a low-memory evaluation order.

\section{Experiments}

We experiment with HyperSPN on three sets of density estimation tasks: the Twenty Datasets benchmark, the Amazon Baby Registries benchmark, and the Street View House Numbers (SVHN) dataset. All runs were done using a single GPU. We primarily compare with weight decay as the competing regularization method. We also include comparisons with smaller SPNs that have the same degrees of freedom as our HyperSPN, and with results reported from other works in literature. We omit results on dropout because in our experimentation (and as suggested by~\cite{peharzRATSPN19}), dropout on SPNs worked poorly for these generative density estimation tasks. In all three sets of tasks, we find that HyperSPNs give better generalization performance than weight decay, when comparing using the same underlying SPN structure and optimizer. Furthermore, HyperSPNs outperforms results reported in literature on EiNETs trained using the same optimization method (Adam), and is competitive against results reported on stochastic EM (sEM).

\subsection{Twenty Datasets}

The Twenty Datasets benchmark~\cite{HaarenD12Twenty} is a standard benchmark used in comparing PC models. Our experiment aims to compare the use of HyperSPN against the baseline of regularization via weight decay, using the same underlying SPN structure for both. We use the RAT-SPN structure described in Section~\ref{sec:RAT-SPN}, choosing layer size parameter \(k=5\) and replicas \(r=50\), randomizing the variable orders for each replica. The number of learnable parameters in this SPN structure is shown in the {\bfseries \# Params} column under Weight Decay in Table~\ref{tab:20datasets}. For Weight Decay, we vary the weight decay value between 1e-3, 1e-4, and 1e-5.

The HyperSPN uses the exact same underlying SPN structure, along with an external neural network (a \(2\) layer MLP of width \(20\)) and embeddings of dimension ranging between \(h=5,10,20\). The number of learnable parameters in the HyperSPN is around \(5\) times smaller, as shown in the {\bfseries \# Params} column under HyperSPN. We describe more details of our training setup in the Appendix.

\setul{1.25pt}{0.8pt}
\begin{table}[h]
    \centering
    \caption{Twenty Datasets. We plot the test log-likelihood and the number of trainable parameters. Bold values indicate the best results between the two regularization strategies of Weight Decay and HyperSPN, using the same SPN structure and optimizer. Underlined values indicate the best results when compared also to reported values in literature using EiNETs/Adam\(^\star\) and RAT-SPN/sEM\(^\dagger\).}
    \label{tab:20datasets}
    \begin{tabular}{l|c|cc|cc||c|c}
    \toprule
        \multirow{3}{*}{Name} &
        \multirow{3}{*}{Variables} &
        \multicolumn{2}{c|}{Adam} &
        \multicolumn{2}{c||}{Adam} &
        Adam &
        sEM
    \\
        & &
        \multicolumn{2}{c|}{Weight Decay} &
        \multicolumn{2}{c||}{HyperSPN} &
        \(\star\) & \(\dagger\)
    \\
         &  & Log-LH & \# Params & Log-LH & \# Params & Log-LH & Log-LH\\ 
    \midrule
        NLTCS & 16 & -6.02 & 40050 & {\bf {\ul {-6.01}}} & 9115 & -6.04 & {\ul {-6.01}} \\
        MSNBC & 17 & {\bf {-6.04}} & 42550 & -6.05 & 9615 & {\ul {-6.03}} & -6.04 \\
        KDDCup2k & 64 & -2.14 & 160050 & {\bf {\ul {-2.13}}} & 33115 & -2.15 & {\ul {-2.13}} \\
        Plants & 69 & -13.41 & 172550 & {\bf {\ul {-13.27}}} & 35615 & -13.74 & -13.44 \\
        Audio & 100 & -40.14 & 250050 & {\bf {\ul {-39.74}}} & 51115 & -40.22 & -39.96 \\
        Jester & 100 & -52.99 & 250050 & {\bf {\ul {-52.74}}} & 51115 & -53.10 & -52.97 \\
        Netflix & 100 & -57.18 & 250050 & {\bf {\ul {-56.62}}} & 51115 & -57.10 & -56.85 \\
        Accidents & 111 & -35.55 & 277550 & {\bf {\ul {-35.40}}} & 56615 & -37.45 & -35.49 \\
        Retail & 135 & -10.90 & 337550 & {\bf {\ul {-10.89}}} & 68615 & -10.97 & -10.91 \\
        Pumsb-star & 163 & -31.08 & 407550 & {\bf {\ul {-31.07}}} & 82615 & -39.23 & -32.53 \\
        DNA & 180 & {\bf {-98.42}} & 450050 & -98.79 & 91115 & -97.68 & {\ul {-97.23}} \\
        Kosarek & 190 & {\bf {\ul {-10.88}}} & 475050 & -10.90 & 96115 & -10.92 & -10.89 \\
        MSWeb & 294 & -10.14 & 735050 & {\bf {\ul {-9.90}}} & 148115 & -10.26 & -10.12 \\
        Book & 500 & {\bf {-34.84}} & 1250050 & -34.86 & 251115 & -35.15 & {\ul {-34.68}} \\
        EachMovie & 500 & -52.85 & 1250050 & {\bf {\ul {-51.62}}} & 251115 & -55.49 & -53.63 \\
        WebKB & 839 & -159.68 & 2097550 & {\bf {-157.69}} & 420615 & -160.51 & {\ul {-157.53}} \\
        Reuters-52 & 889 & -90.15 & 2222550 & {\bf {\ul {-86.12}}} & 445615 & -92.76 & -87.37 \\
        20Newsgrp & 910 & -154.36 & 2275050 & {\bf {-152.49}} & 456115 & -154.41 & {\ul {-152.06}} \\
        BBC & 1058 & -262.77 & 2645050 & {\bf {-254.44}} & 530115 & -267.86 & {\ul {-252.14}} \\
        Ad & 1556 & -54.82 & 3890050 & {\bf {\ul {-28.58}}} & 779115 & -63.82 & -48.47 \\
    \bottomrule
    \end{tabular}
\end{table}

In Table~\ref{tab:20datasets}, we show in bold the better of the results between Weight Decay and HyperSPN. We see that HyperSPN outperforms Weight Decay on all but two of the datasets. On top of that, storing and evaluating the HyperSPN can be much more memory-efficient, as shown by the reduced number of trainable parameters and as described by the techniques in Section~\ref{sec:memory}.

On the two right-most columns, we directly copy over results reported in literature. We compare with training EiNETs on Adam (the same optimizer as our setup)~\cite{PeharzEinet20}, and training RAT-SPNs on sEM (a similar network structure as our setup)~\cite{peharzRATSPN19}. We see that HyperSPN also compares favorably to these reported results (the best values underlined).

To illustrate the regularization effects of HyperSPN, we plot training curves on the training and validation data for the Plants and for Pumsb-star datasets. We use the same HyperSPN model described above. For clarity, let \(M\) denote the size of its underlying SPN and \(m\) denote its true number of learnable parameters (those in its external neural network and embeddings). We then construct two SPNs -- one with size \(M\) (SPN-Large) and one with size \(m\) (SPN-Small) -- and overlay their training curves in Figure~\ref{fig:curve}. For both plots in Figure~\ref{fig:plants} and~\ref{fig:pumsb}, SPN-Large suffers from overfitting on the training data (presumably due to the large degrees of freedom), and SPN-Small fits the data poorly (presumably limited by the size of the SPN). Instead, HyperSPN strikes the balance between the expressiveness of a large underlying SPN, and regularization properties from the compact neural network with low degrees of freedom. This translates to better generalization on the validation data shown in the plots of Figure~\ref{fig:curve}, and on the testing data shown in Table~\ref{tab:20datasets}.

\begin{figure}[htbp]
    \centering
    \begin{subfigure}[t]{0.4\textwidth}
        \vskip 0pt
        \centering
        \includegraphics[width=1\linewidth]{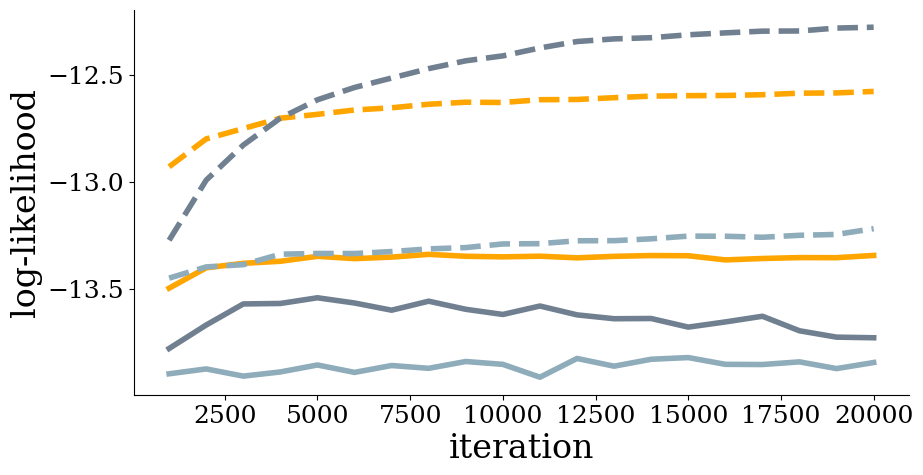}
        \caption{Plants}
        \label{fig:plants}
    \end{subfigure}
    \hfill
    \begin{subfigure}[t]{0.4\textwidth}
        \vskip 0pt
        \centering
        \includegraphics[width=1\linewidth]{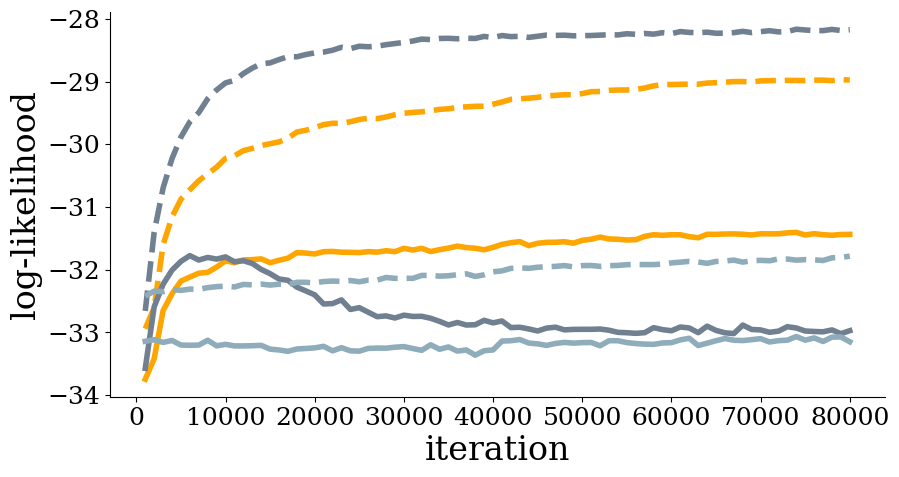}
        \caption{Pumsb-star}
        \label{fig:pumsb}
    \end{subfigure}
    \hfill
    \begin{subfigure}[t]{0.18\linewidth}
        \vskip 10pt
        \centering
        \includegraphics[width=\linewidth]{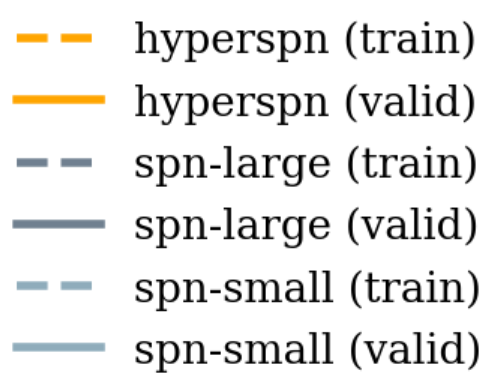}
    \end{subfigure}
    \caption{We plot the training curves on Plants and Pumsb-star from the Twenty Datasets benchmark. The HyperSPN has the same underlying SPN as SPN-Large, but limits its degrees of freedom to the same as those of SPN-Small. In both plots, HyperSPN fits the data better than SPN-Small, and is better regularized than SPN-Large, leading to the best results on the validation set (solid orange line).}
    \label{fig:curve}
\end{figure}

\subsection{Amazon Baby Registries}

We conduct similar experiments on the Amazon Baby Registries dataset~\cite{Gillenwater14DPP}. We repeat the same experimental setup of maximizing the log-likelihood of data, and use the same model structure for HyperSPN and Weight Decay as their respective structures described in the previous section.

In Table~\ref{tab:amzn}, we see that HyperSPN clearly outperforms Weight Decay as a regularization technique, giving better test log-likelihoods (in bold) while using around \(5\) times fewer trainable parameters.
On the right-most column we include results as reported in literature~\cite{ZhangICML21} on training EiNETs using sEM. Although the comparison with this right-most column is hard to interpret due to the differences in model architecture, optimization procedure, and regularization, we still note that HyperSPNs compare favorably on the majority of the datasets (best values underlined).

The experimental results on the Amazon Baby Registries benchmark confirm our findings in the previous subsection. On both tasks, HyperSPNs generalizes much better than Weight Decay when compared on the same underlying SPN structure, while enabling more memory-efficient storage and evaluation of the model.

\begin{table}[htbp]
    \centering
    \caption{Amazon Baby Registries. We plot the test log-likelihood and the number of trainable parameters. Bold values indicate the best results between the two regularization strategies of Weight Decay and HyperSPN, using the same SPN structure and optimizer. Underlined values indicate the best results when compared also to reported values in literature using EiNETs/sEM\(^\star\).}
    \label{tab:amzn}
    \begin{tabular}{l|c|cc|cc||c}
    \toprule
        \multirow{3}{*}{Name} &
        \multirow{3}{*}{Variables} &
        \multicolumn{2}{c|}{Adam} &
        \multicolumn{2}{c||}{Adam} &
        sEM
    \\
        & &
        \multicolumn{2}{c|}{Weight Decay} &
        \multicolumn{2}{c||}{HyperSPN} &
        \(\star\)
    \\
         &  & Log-LH & \# Params & Log-LH & \# Params & Log-LH\\ 
    \midrule
        Apparel & 100 & -9.32 & 250050 & {\bf {-9.28}} & 50655 & {\ul {-9.24}} \\ 
        Bath & 100 & -8.54 & 250050 & {\bf {-8.52}} & 50655 & {\ul {-8.49}} \\ 
        Bedding & 100 & -8.59 & 250050 & {\bf {-8.57}} & 50655 & {\ul {-8.55}} \\ 
        Carseats & 34 & -4.72 & 85050 & {\bf {\ul {-4.65}}} & 17655 & -4.72 \\ 
        Diaper & 100 & -9.99 & 250050 & {\bf {-9.91}} & 50655 & {\ul {-9.86}} \\ 
        Feeding & 100 & -11.35 & 250050 & {\bf {-11.30}} & 50655 & {\ul {-11.27}} \\ 
        Furniture & 32 & -4.54 & 80050 & {\bf {\ul {-4.32}}} & 16655 & -4.38 \\ 
        Gear & 100 & {\bf {-9.21}} & 250050 & {\bf {-9.21}} & 50655 & {\ul {-9.18}} \\ 
        Gifts & 16 & -3.43 & 40050 & {\bf {\ul {-3.40}}} & 8655 & -3.42 \\ 
        Health & 62 & -7.49 & 155050 & {\bf {\ul {-7.41}}} & 31655 & -7.47 \\ 
        Media & 58 & -7.86 & 145050 & {\bf {-7.83}} & 29655 & {\ul {-7.82}} \\ 
        Moms & 16 & -3.48 & 40050 & {\bf {\ul {-3.47}}} & 8655 & -3.48 \\ 
        Safety & 36 & -4.48 & 90050 & {\bf {\ul {-4.34}}} & 18655 & -4.39 \\ 
        Strollers & 40 & -5.25 & 100050 & {\bf {\ul {-5.00}}} & 20655 & -5.07 \\ 
        Toys & 62 & -7.83 & 155050 & {\bf {\ul {-7.79}}} & 31655 & -7.84 \\  
    \bottomrule
    \end{tabular}
\end{table}

\paragraph{Sample Quality} In addition to generalization performance on log-likelihood, we can examine the sample quality of HyperSPNs in comparison to standard SPNs. To estimate sample quality, we take the approach of applying kernel density estimation / Parzen windows~\cite{BengioMDR13} using the test dataset. Out of the \(35\) datasets from the above Twenty Datasets and Amazon Baby Registries, we observe that HyperSPNs report better sample quality for \(29\) of them, compared to SPNs trained with weight decay. We report the full results of this experiment in the Appendix.

\subsection{Street View House Numbers (SVHN)}

Lastly, we experiment on the SVHN dataset~\cite{netzer2011reading}. We use the infrastructure provided by~\cite{PeharzEinet20}, which groups the train/valid/test data into \(100\) clusters, then learns a separate PC for each cluster. The final density estimator is a combination of all the individual PCs under one mixture distribution. Here, we study the performance of each individual PC on each cluster separately, comparing between weight decay and HyperSPN when trained using Adam. 

The underlying PC structure is an EiNET with leaves that hold continuous distributions~\cite{PeharzEinet20}. To introduce soft weight-sharing of the HyperSPN on the EiNET, we partition the parameters into sectors of size \(40\), which is the number of sum nodes used in each layer in the setup of~\cite{PeharzEinet20}. We use an embedding size of \(h=8\), which reduces the number of learnable edge-parameters in the EiNET by about \(5\) fold. In Figure~\ref{fig:svhn}, we plot the difference between HyperSPN and weight decay in log-likelihood on the test set, sorted in increasing order for readability. The higher the value, the more improvement we observe when using HyperSPN as the regularization method. In around \(70\%\) of the clusters, we see an improvement in generalization, and on average (weighted by cluster size) the HyperSPN achieves an improvement in log-likelihood of \(9 \pm 4\).

\begin{figure}[t]
    \centering
    \includegraphics[width=1.0\linewidth]{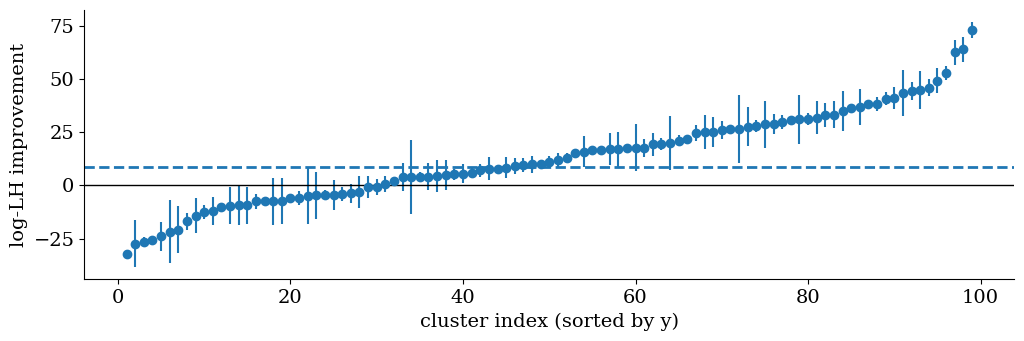}
    \caption{We plot the improvement in test log-likelihood on the Street View House Numbers (SVHN) dataset when using HyperSPN instead of weight decay. Following the setup from~\cite{PeharzEinet20}, the dataset is split into \(100\) clusters and a separate HyperSPN is learned for each cluster. We sort the cluster index by improvement for readability. The dashed blue line corresponds to the average (weighted by cluster size) log-likelihood improvement of \(9\).}
    \label{fig:svhn}
\end{figure}

Our experiments suggest that HyperSPNs give a superior regularization strategy for PCs compared to the primary alternative of weight decay. HyperSPNs also lead to better performance than other reported approaches in literature that use the same optimizer (Adam), as well as other reported methods that use sEM. Our experimental results are general, showing that HyperSPNs are applicable across three standard benchmarks, across PCs with discrete and continuous leaf distributions, and across different state-of-the-art structures such as RAT-SPNs and EiNETs.

\section{Related Work}

Many existing works focus on structure learning for probabilistic circuits, covering a wide range of PC types~\cite{GensLearnSPN13,Rooshenas14IDSPN,Rahman14Cutset,LiangLLC19}. These approaches often regularize by limiting the complexity of the structure~\cite{GensLearnSPN13,Vergari15Simplifying}. However, structure learning in general can be tedious and difficult to tune~\cite{peharzRATSPN19}.

Recent works have embraced a more deep-learning-like framework by prescribing the structure of the PC and learning only the model parameters~\cite{peharzRATSPN19,PeharzEinet20}. This framework leads to the training of large PCs with millions of parameters. However, little exploration has been done around regularization in this new parameter-centric framework. For generative settings, dropout was suggested as unsuitable~\cite{peharzRATSPN19}, leaving only weight decay as the primary parameter regularization method. Concurrently, other works~\cite{LiuTPM21} look into data softening and entropy regularization techniques. In our work, we propose the new regularization strategy of HyperSPNs which show promising improvements in generalization.

Finally, the idea of generating parameters with an external neural network comes from HyperNetworks~\cite{HaHyper17},
which were introduced as an effective architecture for recurrent models. Soft weight-sharing strategies in general~\cite{Nowlan92WeightSharing} have also shown good empirical results~\cite{Bender20WeightSharing} and proven useful for neural network compression~\cite{Ullrich17WeightSharing}. There have also been recent work on using similar techniques to parameterize conditional SPNs~\cite{shao2020conditional, zecevic2021ispns}.

\section{Conclusion}

We propose HyperSPNs, a new paradigm of training large probabilistic circuits using a small-scale external neural network to generate the parameters of the probabilistic circuit. HyperSPNs provide regularization by aggressively reducing the degrees of freedom through a soft weight-sharing strategy. We show empirical improvements in the generalization performance of PCs compared to competing regularization methods, across both discrete and continuous density estimation tasks, and across different state-of-the-art PC structures. As a bonus, HyperSPNs enable memory-efficient storage and evaluation, combining the expressiveness of large PCs with the compactness of small models.

\paragraph{Limitations and Future Work}
Our analysis of the memory efficiency of HyperSPNs is limited their storage and evaluation, since training HyperSPNs involves manipulating of the gradients through the underlying SPN. Future work can look into combining HyperSPNs with techniques from memory-efficient back propagation~\cite{Chen16Memory,Gruslys16Memory} to train very large SPNs that do not fit in memory.

\section{Acknowledgments}

This research was supported by NSF (\#1651565, \#1522054, \#1733686), ONR (N00014-19-1-2145), AFOSR (FA9550-19-1-0024), ARO (W911NF-21-1-0125) and Sloan Fellowship.

\bibliographystyle{plain}
\bibliography{ref}

\newpage
\appendix

The code for our experiments is available at \url{https://github.com/AndyShih12/HyperSPN}.

\section{Hand-Crafted Example}

To examine the merits of HyperSPNs as discussed in Section~\ref{sec:hyperSPN}, we construct a hand-crafted dataset to test the three types of models described in Figure~\ref{fig:motiv}: SPN-Large, SPN-Small, and HyperSPN. The hand-crafted dataset is procedurally generated with \(256\) binary variables and \(10000\) instances, broken into train/valid/test splits at \(70/10/20 \%\). The generation procedure is designed such that the correlation between variable \(i\) and \(j\) is dependent on the path length between leaves \(i\) and \(j\) of a complete binary tree over the \(256\) variables. The exact details can be found in our code.

SPN-Large has the same number of SPN edges as the HyperSPN, while SPN-Small has roughly the same number of trainable parameters as HyperSPN. Both SPN-Large and SPN-Small are regularized via weight-decay. As we can see in Table~\ref{tab:handdesign}, HyperSPN gives the best generalization performance on the test split of our hand-crafted dataset, when compared to standard SPNs with either similar number of SPN edges (SPN-Large) and similar number of trainable parameters (SPN-Small).

\begin{table}[h]
    \caption{Testing HyperSPNs on a hand-crafted toy dataset with \(256\) variables.}
    \label{tab:handdesign}
    \centering
    \begin{tabular}{c|cc}
    \toprule
                        & Log-Likelihood & \# Params \\
    \midrule
         SPN-Large      & -166.90 \(\pm\) 0.03 & 640050 \\
         SPN-Small      & -167.00 \(\pm\) 0.01 & 102450 \\
         HyperSPN       & {\bf -166.32 \(\pm\) 0.04} & 129115 \\
    \bottomrule
    \end{tabular}
\end{table}

\section{Experimental Details}

Here, we provide more details on our experimental setup. In Table~\ref{tab:hyperparam_TwentyAmazon}, we give the hyperparameters used for training our models on the Twenty Datasets and Amazon Baby Registries benchmarks. For both methods we do early stopping by training until the validation performance plateaus/declines (we train some up to \(80\)k steps). Then we take the version of the model that performed best on the validation set, and use it for evaluation on the test set. 

Recall that for the standard SPN, we take gradient descent steps on the mixture weights of the SPN. For the HyperSPN, we take gradient descent steps on the parameters of the external neural network. Empirically, we found that higher learning rates are more suitable for the SPN (e.g. 2e-2), and lower learning rates are more suitable for the HyperSPN (e.g. 5e-3).

\begin{table}[h]
    \caption{Training hyperparameters for Twenty Datasets and Amazon Baby Registries}
    \label{tab:hyperparam_TwentyAmazon}
    \centering
    \begin{tabular}{c|cc}
    \toprule
                        & SPN (Weight Decay)  & HyperSPN \\
    \midrule
         Learning Rate  & 2e-2 & 5e-3 \\
         Weight Decay   & \{1e-3, 1e-4, 1e-5\} & --   \\
         Embedding Dim  & -- & \{5, 10, 20\}\\
         Batch Size     & 500   & 500 \\
    \bottomrule
    \end{tabular}
\end{table}

For the SVHN experiment, we build on the code provided at \url{https://github.com/cambridge-mlg/EinsumNetworks}. We train each method for \(100\) epochs using Adam, and also use early stopping based on the validation set.
The weight decay hyperparameter used for the weights of the models are the same as that shown in Table~\ref{tab:hyperparam_TwentyAmazon}, and we scale down the learning rate for both models to 1e-3 and 3e-4, respectively. We found the slower learning rate to be more suitable for both models on this benchmark, with HyperSPNs still giving the better performance.

For the Twenty Datasets and Amazon Baby Registries, the leaf nodes are binary indicator random variables, hence there are no trainable parameters. For SVHN, the leaf nodes are factorized Gaussians. The training of the leaf distributions was kept the same for SPNs and HyperSPNs (i.e. for HyperSPNs, we do not generate the parameters of the leaf distributions using the external neural network).

\section{Additional Experiments and Ablations}

We provide additional experiments that examine different hyperparameter choices for the embedding size \(h\) and for the weight decay parameter. We also show more detailed results for sample quality as measured using Parzen windows. Finally, we report error bars for the experiments from Tables~\ref{tab:20datasets} \&~\ref{tab:amzn}.

\subsection{Weight Decay and Embedding Size h}

We try out different hyperparameter values for the weight decay value used for regularizing standard SPNs, and for the embedding size of the HyperSPNs. We report these results in Table~\ref{tab:hyperparam_sweep}, where we see that weight decay of 1e-4 for standard SPNs and embedding size of \(h=10\) for HyperSPNs gave slightly better performance than the alternative settings.

\begin{table}[h]
    \centering
    \caption{For standard SPNs (left), we vary the weight decay value between 1e-3, 1e-4, and 1e-5. For HyperSPNs (right), we vary the embedding size \(h\) between \(5\), \(10\), and \(20\).}
    \label{tab:hyperparam_sweep}
    \begin{tabular}{l|c|ccc|ccc}
    \toprule
        \multirow{2}{*}{Name} &
        \multirow{2}{*}{Variables} &
        \multicolumn{3}{c|}{SPN} &
        \multicolumn{3}{c}{HyperSPN}
    \\
        & &
        \(w = \)1e-3 &
        \(w = \)1e-4 &
        \(w = \)1e-5 &
        \(h = 5\) &
        \(h = 10\) &
        \(h = 20\) 
    \\
    \midrule
        NLTCS & 16 & -6.02 & -6.02 & -6.02 & -6.03 & -6.01 & -6.02\\
        MSNBC & 17 & -6.06 & -6.04 & -6.04 & -6.05 & -6.06 & -6.07\\
        KDDCup2k & 64 & -2.14 & -2.14 & -2.14 & -2.14 & -2.13 & -2.13\\
        Plants & 69 & -13.44 & -13.41 & -13.45 & -13.31 & -13.27 & -13.27\\
        Audio & 100 & -40.16 & -40.14 & -40.17 & -39.86 & -39.74 & -39.75\\
        Jester & 100 & -53.01 & -52.99 & -53.05 & -52.92 & -52.74 & -52.83\\
        Netflix & 100 & -57.18 & -57.20 & -57.21 & -56.73 & -56.66 & -56.62\\
        Accidents & 111 & -35.94 & -35.55 & -35.65 & -36.02 & -35.52 & -35.40\\
        Retail & 135 & -10.92 & -10.92 & -10.90 & -10.89 & -10.92 & -10.92\\
        Pumsb-star & 163 & -31.89 & -31.08 & -31.48 & -31.46 & -31.39 & -31.07\\
        DNA & 180 & -98.59 & -98.42 & -98.45 & -98.79 & -99.05 & -98.88\\
        Kosarek & 190 & -10.91 & -10.89 & -10.88 & -10.90 & -10.92 & -10.92\\
        MSWeb & 294 & -10.28 & -10.14 & -10.14 & -9.93 & -9.92 & -9.90\\
        Book & 500 & -34.90 & -34.84 & -35.02 & -34.95 & -35.02 & -34.86\\
        EachMovie & 500 & -52.85 & -53.21 & -54.01 & -51.62 & -52.10 & -52.00\\
        WebKB & 839 & -159.68 & -160.10 & -160.06 & -157.69 & -158.35 & -158.24\\
        Reuters-52 & 889 & -92.82 & -90.15 & -90.99 & -86.93 & -86.12 & -86.76\\
        20Newsgrp & 910 & -154.36 & -154.70 & -155.01 & -152.57 & -152.82 & -152.49\\
        BBC & 1058 & -267.47 & -262.77 & -268.99 & -256.07 & -254.44 & -255.81\\
        Ad & 1556 & -56.34 & -54.90 & -54.82 & -31.50 & -28.58 & -29.84\\
    \midrule
        Apparel & 100 & -9.32 & -9.33 & -9.33 & -9.30 & -9.28 & -9.28\\
        Bath & 100 & -8.54 & -8.59 & -8.60 & -8.52 & -8.52 & -8.52\\
        Bedding & 100 & -8.64 & -8.59 & -8.63 & -8.59 & -8.58 & -8.57\\
        Carseats & 34 & -4.72 & -4.82 & -4.76 & -4.77 & -4.65 & -4.66\\
        Diaper & 100 & -9.99 & -10.03 & -10.00 & -9.91 & -9.94 & -9.93\\
        Feeding & 100 & -11.37 & -11.35 & -11.41 & -11.31 & -11.30 & -11.31\\
        Furniture & 32 & -4.56 & -4.54 & -4.60 & -4.48 & -4.46 & -4.32\\
        Gear & 100 & -9.24 & -9.21 & -9.21 & -9.41 & -9.21 & -9.21\\
        Gifts & 16 & -3.48 & -3.43 & -3.48 & -3.44 & -3.40 & -3.42\\
        Health & 62 & -7.49 & -7.50 & -7.49 & -7.75 & -7.41 & -7.43\\
        Media & 58 & -7.86 & -7.91 & -7.90 & -7.87 & -7.83 & -7.87\\
        Moms & 16 & -3.49 & -3.48 & -3.49 & -3.48 & -3.50 & -3.47\\
        Safety & 36 & -4.48 & -4.48 & -4.57 & -4.44 & -4.37 & -4.34\\
        Strollers & 40 & -5.27 & -5.29 & -5.25 & -5.23 & -5.02 & -5.00\\
        Toys & 62 & -7.83 & -7.83 & -7.83 & -7.83 & -7.81 & -7.79\\
    \midrule
        Average & - & -37.18 & {\bf {-36.91}} & -37.17 & -35.79 & {\bf {-35.63}} & -35.68\\
    \bottomrule
    \end{tabular}
\end{table}

\subsection{Sample Quality}

Next, we report estimates of sample quality of the trained SPN / HyperSPN using Parzen windows on the test data. We treat the binary data as real vectors, and use a Gaussian kernel with a fixed variance on each of the test data points. Then, we sample \(500\) data points from the trained SPN / HyperSPN, and compute the log-likelihood of the samples.

\begin{table}[h]
    \centering
    \caption{We examine an estimate of sample quality using Parzen windows on the test data, taking \(500\) samples per dataset. We see that HyperSPNs give better sample quality under this metric (higher is better).}
    \label{tab:sample_quality}
    \begin{tabular}{l|c|c|c}
    \toprule
        Name & Variables & SPN & HyperSPN
    \\
    \midrule
        NLTCS & 16 & -2.346 & -2.348\\
        MSNBC & 17 & -2.247 & -2.247\\
        KDDCup2k & 64 & -2.066 & -2.066\\
        Plants & 69 & -2.927 & -2.926\\
        Audio & 100 & -3.606 & -3.594\\
        Jester & 100 & -4.329 & -4.319\\
        Netflix & 100 & -4.470 & -4.467\\
        Accidents & 111 & -3.557 & -3.554\\
        Retail & 135 & -2.290 & -2.290\\
        Pumsb-star & 163 & -4.247 & -4.238\\
        DNA & 180 & -5.693 & -5.676\\
        Kosarek & 190 & -2.310 & -2.318\\
        MSWeb & 294 & -2.299 & -2.299\\
        Book & 500 & -2.758 & -2.740\\
        EachMovie & 500 & -4.053 & -3.972\\
        WebKB & 839 & -6.307 & -6.111\\
        Reuters-52 & 889 & -4.632 & -4.558\\
        20Newsgrp & 910 & -5.683 & -5.705\\
        BBC & 1058 & -9.401 & -8.942\\
        Ad & 1556 & -3.187 & -3.147\\
    \midrule
        Apparel & 100 & -2.257 & -2.251\\
        Bath & 100 & -2.219 & -2.209\\
        Bedding & 100 & -2.233 & -2.224\\
        Carseats & 34 & -2.151 & -2.152\\
        Diaper & 100 & -2.273 & -2.262\\
        Feeding & 100 & -2.308 & -2.301\\
        Furniture & 32 & -2.146 & -2.144\\
        Gear & 100 & -2.238 & -2.228\\
        Gifts & 16 & -2.134 & -2.134\\
        Health & 62 & -2.205 & -2.205\\
        Media & 58 & -2.229 & -2.227\\
        Moms & 16 & -2.135 & -2.135\\
        Safety & 36 & -2.141 & -2.142\\
        Strollers & 40 & -2.163 & -2.160\\
        Toys & 62 & -2.219 & -2.216\\
    \midrule
        Average & - & -3.185 & {\bf {-3.157}}\\
    \bottomrule
    \end{tabular}
\end{table}

\newpage
\subsection{Error Bars}

Lastly, we report the main results from Table~\ref{tab:20datasets} and Table~\ref{tab:amzn} with error bars, computed over \(3\) separate runs with different random seeds.

\begin{table}[h]
    \centering
    \caption{We report the values from in Table~\ref{tab:20datasets} and Table~\ref{tab:amzn} with error bars, computed over \(3\) separate runs. We denote statistical significance with a \(\star\).}
    \label{tab:error_bars}
    \begin{tabular}{l|c|r|r}
    \toprule
        Name & Variables & SPN & HyperSPN
    \\
    \midrule
        NLTCS & 16 & -6.02\phantom{\(^\star\)} \(\pm\) 0.00 & -6.01\(^\star\) \(\pm\) 0.00\\
        MSNBC & 17 & -6.04\phantom{\(^\star\)} \(\pm\) 0.00 & -6.05\phantom{\(^\star\)} \(\pm\) 0.01\\
        KDDCup2k & 64 & -2.14\phantom{\(^\star\)} \(\pm\) 0.00 & -2.13\(^\star\) \(\pm\) 0.00\\
        Plants & 69 & -13.41\phantom{\(^\star\)} \(\pm\) 0.03 & -13.27\(^\star\) \(\pm\) 0.06\\
        Audio & 100 & -40.14\phantom{\(^\star\)} \(\pm\) 0.01 & -39.74\(^\star\) \(\pm\) 0.03\\
        Jester & 100 & -52.99\phantom{\(^\star\)} \(\pm\) 0.02 & -52.74\(^\star\) \(\pm\) 0.02\\
        Netflix & 100 & -57.18\phantom{\(^\star\)} \(\pm\) 0.01 & -56.62\(^\star\) \(\pm\) 0.01\\
        Accidents & 111 & -35.55\phantom{\(^\star\)} \(\pm\) 0.07 & -35.40\phantom{\(^\star\)} \(\pm\) 0.11\\
        Retail & 135 & -10.90\phantom{\(^\star\)} \(\pm\) 0.01 & -10.89\phantom{\(^\star\)} \(\pm\) 0.03\\
        Pumsb-star & 163 & -31.08\phantom{\(^\star\)} \(\pm\) 0.12 & -31.07\phantom{\(^\star\)} \(\pm\) 0.02\\
        DNA & 180 & -98.42\(^\star\) \(\pm\) 0.05 & -98.79\phantom{\(^\star\)} \(\pm\) 0.09\\
        Kosarek & 190 & -10.88\phantom{\(^\star\)} \(\pm\) 0.01 & -10.90\phantom{\(^\star\)} \(\pm\) 0.03\\
        MSWeb & 294 & -10.14\phantom{\(^\star\)} \(\pm\) 0.02 & -9.90\(^\star\) \(\pm\) 0.02\\
        Book & 500 & -34.84\phantom{\(^\star\)} \(\pm\) 0.03 & -34.86\phantom{\(^\star\)} \(\pm\) 0.02\\
        EachMovie & 500 & -52.85\phantom{\(^\star\)} \(\pm\) 0.11 & -51.62\(^\star\) \(\pm\) 0.22\\
        WebKB & 839 & -159.68\phantom{\(^\star\)} \(\pm\) 0.21 & -157.69\(^\star\) \(\pm\) 0.63\\
        Reuters-52 & 889 & -90.15\phantom{\(^\star\)} \(\pm\) 0.49 & -86.12\(^\star\) \(\pm\) 0.13\\
        20Newsgrp & 910 & -154.36\phantom{\(^\star\)} \(\pm\) 0.04 & -152.49\(^\star\) \(\pm\) 0.67\\
        BBC & 1058 & -262.77\phantom{\(^\star\)} \(\pm\) 0.10 & -254.44\(^\star\) \(\pm\) 0.29\\
        Ad & 1556 & -54.82\phantom{\(^\star\)} \(\pm\) 0.97 & -28.58\(^\star\) \(\pm\) 0.20\\
    \midrule
        Apparel & 100 & -9.32\phantom{\(^\star\)} \(\pm\) 0.01 & -9.28\phantom{\(^\star\)} \(\pm\) 0.02\\
        Bath & 100 & -8.54\phantom{\(^\star\)} \(\pm\) 0.01 & -8.52\phantom{\(^\star\)} \(\pm\) 0.02\\
        Bedding & 100 & -8.59\phantom{\(^\star\)} \(\pm\) 0.01 & -8.57\(^\star\) \(\pm\) 0.00\\
        Carseats & 34 & -4.72\phantom{\(^\star\)} \(\pm\) 0.01 & -4.65\(^\star\) \(\pm\) 0.01\\
        Diaper & 100 & -9.99\phantom{\(^\star\)} \(\pm\) 0.07 & -9.91\(^\star\) \(\pm\) 0.02\\
        Feeding & 100 & -11.35\phantom{\(^\star\)} \(\pm\) 0.01 & -11.30\(^\star\) \(\pm\) 0.00\\
        Furniture & 32 & -4.54\phantom{\(^\star\)} \(\pm\) 0.02 & -4.32\phantom{\(^\star\)} \(\pm\) 0.13\\
        Gear & 100 & -9.21\phantom{\(^\star\)} \(\pm\) 0.01 & -9.21\phantom{\(^\star\)} \(\pm\) 0.01\\
        Gifts & 16 & -3.43\phantom{\(^\star\)} \(\pm\) 0.02 & -3.40\(^\star\) \(\pm\) 0.00\\
        Health & 62 & -7.49\phantom{\(^\star\)} \(\pm\) 0.01 & -7.41\(^\star\) \(\pm\) 0.02\\
        Media & 58 & -7.86\phantom{\(^\star\)} \(\pm\) 0.01 & -7.83\phantom{\(^\star\)} \(\pm\) 0.03\\
        Moms & 16 & -3.48\phantom{\(^\star\)} \(\pm\) 0.00 & -3.47\phantom{\(^\star\)} \(\pm\) 0.03\\
        Safety & 36 & -4.48\phantom{\(^\star\)} \(\pm\) 0.01 & -4.34\(^\star\) \(\pm\) 0.01\\
        Strollers & 40 & -5.25\phantom{\(^\star\)} \(\pm\) 0.01 & -5.00\(^\star\) \(\pm\) 0.03\\
        Toys & 62 & -7.83\phantom{\(^\star\)} \(\pm\) 0.02 & -7.79\phantom{\(^\star\)} \(\pm\) 0.03\\
    \midrule
        Average & - & -36.87 \(\pm\) 0.07 & {\bf {-35.55}}\(^\star\) \(\pm\) {\bf {0.08}}\\
    \bottomrule
    \end{tabular}
\end{table}

\end{document}